\newcommand{\eg}{\emph{e.g.}}
\newcommand{\ie}{\emph{i.e.}}
\newcommand{\etc}{\emph{etc.}}
\newcommand{\wrt}{w.r.t. }
\newtheorem{myTheo}{Theorem}
\begin{document}

\title{Parts for the Whole: \\The DCT Norm for Extreme Visual Recovery}

\author{Yunhe Wang, Chang Xu, Shan You, 
Dacheng Tao,~\IEEEmembership{Fellow,~IEEE} and Chao Xu
\IEEEcompsocitemizethanks{\IEEEcompsocthanksitem Y. Wang, C. Xu, S. You, and C. Xu are with the Key Laboratory of Machine Perception (Ministry of Education) and Cooprtative Medianet Innovation Center, School of Electronics Engineering and Computer Science, Peking University, Beijing 100871, P.R. China. E-mail: \{wangyunhe, xuchang, youshan\}@pku.edu.cn, xuchao@cis.pku.edu.cn.
\IEEEcompsocthanksitem D. Tao is with the Centre for Quantum Computation and Intelligent Systems, Faculty of Engineering and Information Technology, University of Technology at Sydney, Sydney, NSW 2007, Australia. E-mail: dacheng.tao@uts.edu.au.
}}

\markboth{ }
{\MakeLowercase{Wang \textit{et al.}}: Parts for the Whole: The DCT Norm for Extreme Visual Recovery}

\IEEEtitleabstractindextext{%
\begin{abstract}
Here we study the extreme visual recovery problem, in which over 90\% of pixel values in a given image are missing. Existing low rank-based algorithms are only effective for recovering data with at most 90\% missing values. Thus, we exploit visual data's smoothness property to help solve this challenging extreme visual recovery problem. Based on the Discrete Cosine Transformation (DCT), we propose a novel DCT norm that involves all pixels and produces smooth estimations in any view. Our theoretical analysis shows that the total variation (TV) norm, which only achieves local smoothness, is a special case of the proposed DCT norm. We also develop a new visual recovery algorithm by minimizing the DCT and nuclear norms to achieve a more visually pleasing estimation. Experimental results on a benchmark image dataset demonstrate that the proposed approach is superior to state-of-the-art methods in terms of peak signal-to-noise ratio and structural similarity.
\end{abstract}

\begin{IEEEkeywords}
Discrete Cosine Transform, Visual recovery, DCT norm, Low-rank minimization.
\end{IEEEkeywords}}

\maketitle

\IEEEdisplaynontitleabstractindextext

\IEEEpeerreviewmaketitle

\section{Introduction}
\IEEEPARstart{V}{isual} data can be corrupted due to sensory noise or interferential outliers during data acquisition.  A fraction of image pixels could be missing sometimes, if the conditions deteriorate further, missing pixels will constantly increase, \eg, terribly-damaged images due to unstable online transmission, cameras covered by noises, photographs overexposed accidently, and outdoor pictures taken behind a screen window. For satisfactory visual recognition, detection, and tracking, corrupted visual data must be recovered during pre-processing~\cite{ijcai13,nuclearnorm,softthreshold,tensor13,MC10,ijcai15}

There has recently been a surge in low rank-based matrix completion methods for visual recovery. Given a matrix $M\in \mathcal{R}^{n \times m}$, where $n$ and $m$ are the width and height of $M$, respectively, ${M_{i,j}=0,\ (i,j) \notin \Omega}$ denote the observed data while the others represent missing data. In general, a low rank matrix $\hat{M}$ can be discovered to approximately represent matrix $M$. The rank of $\hat{M}$ is usually assumed to be lower than any of its two dimensionalities, \ie, $rank(\hat{M})\ll \min(n,m)$. 

Although rank minimization provides an approach for recovering the missing observations, it is computationally intractable (NP-hard and non-convex). Under broad conditions, \cite{nuclearnorm} reported that the rank function minimization can be replaced by the trace norm $||X||_* = \sum_k \sigma_k\left(X\right)$ minimization, where $\sigma_k$ is the $k$-$th$ maximum singular value of $X$. Soft-thresholding methods~\cite{softthreshold} are often employed to solve optimization problems with this trace norm regularization. However, trace norms have been improved to better investigate the low rank constraint. For example, \cite{tnuclearnorm} designed a truncated nuclear norm more suitable for matrix completion and~\cite{WNNM} proposed a weighted nuclear norm minimization for image denoising.

\begin{figure}[t]
\begin{center}
   \includegraphics[width=0.95\linewidth]{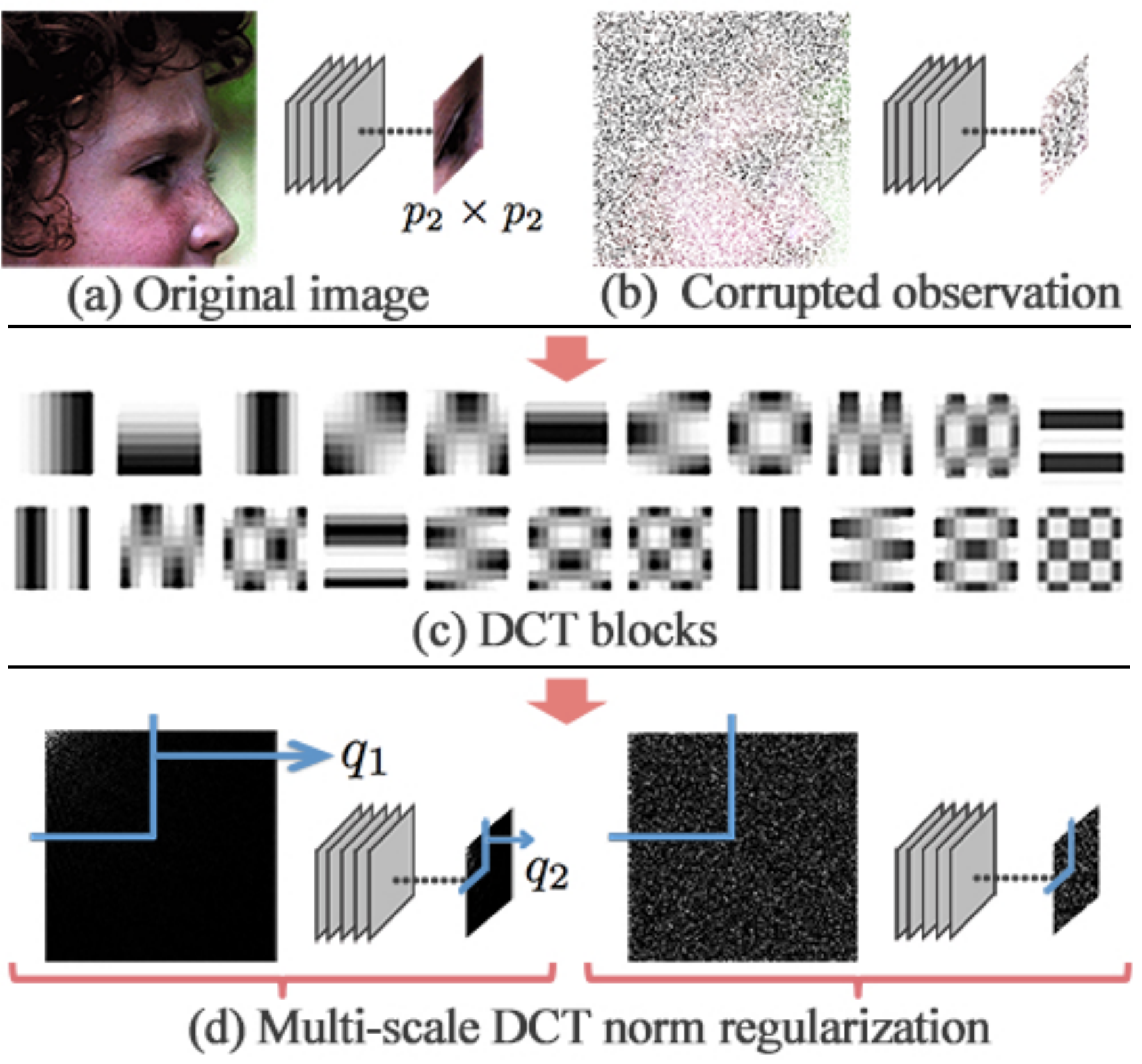}
\end{center}
   \caption{A schematic of the calculation of the proposed DCT norm at two scales. The input image is first densely divided into several $p_i \times p_i$ patches that are then converted to the frequency domain. The DCT norm is returned by accumulating squared values of high-frequency coefficients whose positions are larger than $q_i$ in all the patches.}
\label{Fig:Intro}
\end{figure}

Existing rank minimizing techniques are effective and have delivered promising performance in many visual recovery problems. However, most of these algorithms have only been evaluated for data with at most 80\% missing values~\cite{tensor13}. They may not, therefore, be applicable to data with extremely high numbers of missing values (\eg, 95\%) that occur when data collection devices fail or there is damage to the transmission medium. This extreme visual recovery problem with a very large number of missing pixel values is, therefore, very challenging due to the conflict between the small amount of observed data and the tremendous amount of data that needs to be recovered.

As well as the low rank assumption, smoothness is another important property carried by visual data. The total variation (TV) norm~\cite{TVnorm} is an effective way to exploit smoothness by calculating the differences between neighboring pixels and has been widely used in image processing. However, minimizing local differences via the TV norm risks over-smoothing the image detail and texture~\cite{NLM}.

We consider employing the Discrete Cosine Transformation (DCT~\cite{DCT}) to devise a new smoothness regularizer called the ``DCT norm''. By constraining the frequency coefficients of the data obtained via DCT, the DCT norm flexibly adjusts the degree of smoothness over the data (Fig.\ref{Fig:Intro}). To investigate the local smoothness and multi-scale properties of visual data, we extend the DCT norm to derive local and multi-scale versions, respectively. Our theoretical analysis shows that the DCT norm includes the TV norm as a special case. By combining the classical rank minimization principle with the proposed DCT norm minimization, the resulting model can simultaneously exploit the low rank and smoothness properties of visual data for extreme visual recovery. Experimental results on real-world datasets demonstrate that the proposed DCT norm is highly effective and that the low rank and smoothness issues need to be integrated for successful extreme visual recovery.

\section{Related works}
We first briefly introduce related works on visual completion and TV norm minimization.

Nuclear norm minimization-based matrix completion~\cite{nuclearnorm} is formulated as 
\begin{equation}
\hat{X} = \arg\min_X ||X||_*,\quad s.t.\ X_\Omega = M_\Omega,
\label{Fcn:lowrank}
\end{equation}
where $\hat{X}$ is the optimal estimation of $M$. The truncated nuclear norm minimization problem~\cite{tnuclearnorm}  is defined as: 
\begin{equation}
\hat{X} = \arg\min_X ||X||_r,\quad s.t.\ X_\Omega = M_\Omega,
\label{Fcn:tlowrank}
\end{equation}
where $||X||_r = \sum_{i=r+1}^{\min(n,m)}\sigma_i(X)$ is the sum of $\min(n,m)-r$ minimum singular values.
 
It is widely known that the TV norm~\cite{TVnorm} is an efficient smoothness regularization that accumulates all the gradients of a given image $X$: 
\begin{equation}
||X||_{TV} = \sum_{i,j} \sqrt{|X_{i+1,j}-X_{i,j}|^2+|X_{i,j+1}-X_{i,j}|^2},
\label{Fcn:TVnorm}
\end{equation}
where $i$ and $j$ denote the vertical and horizontal positions of $X$, respectively. Since the TV norm accumulates gradient modules of the entire image, minimizing the TV norm can result in a smooth estimation: 
\begin{equation}
\hat{X} = \arg\min_X ||X||_{TV}, \quad s.t.\ X_\Omega = M_\Omega.
\label{Fcn:minTV}
\end{equation}

\cite{jointTV} proposed to simultaneously use the TV and nuclear norms for image recovery: 
\begin{equation}
\hat{X} = \arg\min_X||X||_*+\lambda||X||_{TV}, \quad s.t.\ X_\Omega = M_\Omega,
\label{Fcn:joint}
\end{equation}
where $\lambda$ balances the two norms. To the best of our knowledge, Fcn.\ref{Fcn:joint} is the first algorithm to integrate smoothness and nuclear norm regularization.

Since the $||\cdot||_{TV}$ is isotropic and not differentiable, an anisotropic version is proposed in~\cite{aTVnorm} that is easier to minimize: 
\begin{equation}
||X||_{anisoTV}=\sum_{i,j}|X_{i+1,j}-X_{i,j}|+|X_{i,j+1}-X_{i,j}|.
\end{equation}

Additionally, in~\cite{LTV}, a modified linear total variation was defined as: 
\begin{equation}
||X||_{LTV} = \sum_{i,j}|X_{i+1,j}-X_{i,j}|^2+|X_{i,j+1}-X_{i,j}|^2,
\end{equation}
which leads to a smooth low-rank matrix completion problem: 
\begin{equation}
\hat{X} = \arg\min_X||X||_*+\lambda||X||_{LTV}, \quad s.t.\ X_\Omega = M_\Omega.
\label{Fcn:LTVnuclear}
\end{equation}

An ADMM-like optimization scheme~\cite{ADMM} can be adopted to solve Fcn.\ref{Fcn:LTVnuclear}. However, the traditional TV norm only can guarantee an estimation presenting a locally smooth visualization, when in reality a natural image should be smoothed at every scale. In the next section, we propose a multi-scale DCT norm in the frequency domain.

\section{The DCT norm}
A natural image has smooth regions in the spatial domain. Furthermore, there is less high-frequency information than low-frequency information in the image's frequency domain~\cite{DCTdistribution}. This section presents a novel Discrete Cosine Transformation (DCT) norm for smoothing the objective variable, which provides advantages over the TV norm due to its linear and convex properties.

\subsection{The DCT for a 2D matrix}
DCT is widely used in image compression and is an approximate KL transformation~\cite{DCT}. For an arbitrary matrix $X$, its DCT coefficient matrix $\mathcal{C}$ is: 

\begin{equation}
\mathcal{C}_{X} = \mathcal{T}\left(X\right) = CXC^{\mathbf T} = \left(C \otimes C\right)vec(X),
\label{Fcn:DCT}
\end{equation}
where $\otimes$ is the Kronecker product, $vec(\cdot)$ denotes the vectorization, and $C$ denotes the transformation matrix with the same size as $X$: 

\begin{equation}
C_{ij} = \alpha\left(i\right) cos\left[\frac{(j+0.5)\pi}{N}i\right],
\end{equation}
and
\begin{equation}
\alpha\left(i\right) = \left\{
\begin{aligned}
& \sqrt{1/N}, \ \ i =0\\
& \sqrt{2/N}, \ \ else.
\end{aligned}
\right.
\end{equation}

$\mathcal{C}$ has the same dimensionality with $X$, where $\mathcal{C}_{0,0}$ is the DC (direct current) coefficient which only consists of the overall illumination information~\cite{DC0}, the other coefficients in  $\mathcal{C}$ are AC (alternating current) components denote the energies of every frequency levels, \ie the weights of the DCT blocks as showed in Fig.\ref{Fig:Intro}(c). Additionally, it is instructive to note that DCT is a linear lossless transformation and the original data can be restored by $X=\mathcal{T}^{-1}\left(\mathcal{C}_X\right)=\left(C \otimes C\right)^T \mathcal{C}_X$. DCT is also applied to a variety of computer vision tasks, such as image denoising~\cite{BM3D} and image representation~\cite{DCTface}.

Since Fcn.\ref{Fcn:DCT} involves all elements in $X$ we can access the overall structural information of the entire matrix. The DCT and its gradient can be quickly calculated using linear transformations such that the proposed DCT norm-based optimization problems can be efficiently and easily solved.

\subsection{The DCT norm for smoothing}
\textbf{The global smooth DCT norm.} Neighboring pixels in a natural image are generally significantly correlated. On the other hand, the abnormal signal (\eg, noise, missing values, \etc) can be seen as a set of external data subject to an i.i.d. distribution. Hence, the frequency distributions of natural images and the abnormal signals are distinct (Fig.\ref{Fig:Intro}). The high-frequency information of the original image is much lower than that of the corrupted observation $M$. Based on this observation, we design a DCT norm in the frequency domain: 

\begin{equation}
\begin{aligned}
||X||_{DCT}^q &= ||S_q*\left(CXC^{\mathbf T}\right)||_F^2\\
&= ||vec(S_q)*\left(C\otimes C vec(X)\right)||^2,
\end{aligned}
\label{Fcn:DCTglobal}
\end{equation}
where $*$ denotes the Hadamard product, $||\cdot||_F$ denotes the Frobenius norm, and $S_c$ is a selection mask with parameter $c$ denoting the cut-off position: 
\begin{equation}
S_q = \left\{ \begin{array}{l}
S_{ij} =  1, \quad i\leq q\ \&\ j\leq q \\
S_{ij} =  0, \quad  otherwise.
\end{array} \right.
\end{equation}

The smoothing-oriented visual recovery problem can thus be formulated as: 
\begin{equation}
\hat{X} = \arg\min_X ||X||_{DCT}^q, \quad s.t.\ X_\Omega = M_\Omega.
\label{Fcn:globalsmooth}
\end{equation}

Although we can obtain a smooth estimation by solving the above problem, the optimal solution of Fcn.\ref{Fcn:globalsmooth} will have some deformations as shown in Fig.\ref{Fig:exp}. This is due to some remaining frequency coefficients with positions lower than $q$ still needing to be discarded since they were difficult to estimate. A local smoothness regularization can eliminate these deformations. Therefore, we expand the global smoothed DCT norm to a more comprehensive model that can also represent the local smoothness of the given image.

\noindent \textbf{The locally smooth DCT norm.} Inspired by non-local denoising methods~\cite{NLM,WNNM}, we divide the corrupted observation into several small patches or the locally smooth DCT norm:
\begin{equation}
\begin{aligned}
||X||_{DCT}^{p,q} &= \sum_l ||S_{p,q}*\left(C_p x_p^{(l)} C_p^{\mathbf T}\right)||_F^2\\
&= ||\mathbf{S}_{p,q}*\left(C_p \otimes C_p \mathbf{X}_p\right)||_F^2,
\end{aligned}
\label{Fcn:DCTlocal}
\end{equation}
where $p$ is the scale parameter, $||X||_{DCT}^{p,q}$ will be equivalent to Fcn.\ref{Fcn:DCTglobal} given $p = N$, $S_{p,q}$ and $C_p$ denote the mask and the transformation matrix generated according to p, respectively,  $x_p^{(l)}$ is a $p \times p$ matrix denoting the $l$-$th$ patch extracted from $X$, $\mathbf{X}_p = \left[vec(x_p^{(1)}), vec(x_p^{(2)}), ..., vec(x_p^{(l)})\right]$ stacks $x_p^{(l)}$ into a matrix, and $\mathbf{S}_{p,q} = [vec(S_{p,q}),...,vec(S_{p,q})]$, which has the same dimensionality to that of $\mathbf{X}$. Patches extracted at every pixel are overlapping. For an $N\times M$ image, there are $(N-p+1)\times(M-p+1)$ patches with size $p\times p$. 
 
The resulting optimization function \wrt the local DCT norm is: 
\begin{equation}
\hat{X} = \arg\min_X ||X||_{DCT}^{p,q}, \quad s.t.\ X_\Omega = M_\Omega.
\label{Fcn:localmin}
\end{equation}

\noindent \textbf{The multi-scale DCT norm.} Both the locally smooth DCT norm and the global smooth DCT norm have pros and cons and can be further refined. Inspired by research on the local descriptor~\cite{SIFT,multiregion}, detecting and describing a key point based on multiple scales can capture more geometric information. Hence, we propose to integrate local and global smoothness by integrating DCT norms from multiple image scales:
\begin{equation}
\hat{X} = \arg\min_X \sum_{i=1}^{s} ||X||_{DCT}^{p_i,q_i}, \quad s.t.\ X_\Omega = M_\Omega.
\label{Fcn:multiscale}
\end{equation}
where $s$ indicates the number of scales. If $s = 1$ and $p = N$, Fcn.\ref{Fcn:multiscale} will focus on global smoothness. Given $s = 1$ and $p = 2$, Fcn.\ref{Fcn:multiscale} will be reduced to the local DCT norm minimization problem in Fcn.\ref{Fcn:DCTlocal}. 

\subsection{Relationship to the TV Norm}
The TV norm is an efficient smoothing tool (see Fcn.\ref{Fcn:TVnorm}). The following theorem suggests that the TV norm can be regarded as a special case of DCT norm, thereby demonstrating the superiority of the proposed DCT norm. 

\begin{myTheo}
Given $X \in \mathbb{R}^{n \times m}$, the optimal solution of Fcn.\ref{Fcn:minTV} with the linear TV norm is exactly that of the DNM problem in Fcn.\ref{Fcn:localmin} with $p$ = 2 and $q$ = 1.
\end{myTheo}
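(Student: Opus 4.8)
The plan is to recast both optimizations as constrained quadratic programs over the same feasible set $\{X : X_\Omega = M_\Omega\}$ and to show their objectives share a minimizer. First I would write down the $2\times 2$ DCT matrix, which from the definition of $C$ is $C_2 = \tfrac{1}{\sqrt 2}\begin{pmatrix} 1 & 1 \\ 1 & -1\end{pmatrix}$, and compute the coefficient block $C_2 x C_2^{\mathbf T}$ for a generic patch $x = \begin{pmatrix} a & b \\ c & d\end{pmatrix}$. This yields the DC term $\mathcal{C}_{00} = \tfrac12(a+b+c+d)$ together with the three AC terms $\mathcal{C}_{01},\mathcal{C}_{10},\mathcal{C}_{11}$ retained by the mask $S_{2,1}$, which—reading Fig.\ref{Fig:Intro}—keeps the high-frequency coefficients and drops only the single DC term when $q=1$. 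Since the patchwise summand of Fcn.\ref{Fcn:DCTlocal} is exactly the energy of these retained coefficients, the whole DCT objective becomes a sum over all overlapping $2\times2$ windows of $\mathcal{C}_{01}^2+\mathcal{C}_{10}^2+\mathcal{C}_{11}^2$.

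The key computation is to expand this retained energy. Using orthonormality of $C_2$ (Parseval), the AC energy of a patch equals $\|x\|_F^2-\mathcal{C}_{00}^2$, which a short expansion rewrites as $\tfrac14$ times the sum of all six squared pairwise differences of $a,b,c,d$. Four of these pairs are axial: $(a-b)^2$ and $(c-d)^2$ are the horizontal first differences $(X_{i,j+1}-X_{i,j})^2$ of the two rows, while $(a-c)^2$ and $(b-d)^2$ are the vertical first differences $(X_{i+1,j}-X_{i,j})^2$ of the two columns. Summing these axial contributions over all overlapping windows, every interior horizontal and vertical edge is counted in two adjacent patches, so—up to a global positive factor and a lighter weighting of the border edges—they reassemble the linear TV sum $\sum_{i,j}|X_{i+1,j}-X_{i,j}|^2+|X_{i,j+1}-X_{i,j}|^2$ driving the linear-TV version of Fcn.\ref{Fcn:minTV}. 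Because a positive rescaling and an additive constant do not move the argmin over a fixed affine constraint set, matching the axial part would identify the two minimizers.

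The main obstacle is the remaining two diagonal pairs, $(a-d)^2=(X_{i,j}-X_{i+1,j+1})^2$ and $(b-c)^2=(X_{i,j+1}-X_{i+1,j})^2$, which the linear TV norm does not contain: expressed through the axial increments they introduce cross terms coupling the horizontal and vertical differences, and these neither telescope nor cancel when summed over overlapping windows. A uniform mask therefore produces all six pairwise differences rather than only the four axial ones, so the DCT objective is genuinely a different quadratic form, not merely a rescaling of linear TV; indeed, recovering the bottom row of a single block from its observed top row already gives $c=d=\tfrac12(a+b)$ under the AC energy but $c=\tfrac13(2a+b),\,d=\tfrac13(a+2b)$ under linear TV. Hence the crux is to control these diagonal/cross contributions: one must either argue they are constant on the feasible set (and so irrelevant to the argmin) or pin down the precise mask, scale weights, or tiling under which the retained energy collapses to the pure axial form $2\mathcal{C}_{01}^2+2\mathcal{C}_{10}^2+4\mathcal{C}_{11}^2$. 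Establishing exactly which convention makes the claimed equality hold is where I expect the real difficulty to lie.
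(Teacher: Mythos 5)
Your expansion is correct, and the obstruction you isolate is genuine rather than an artifact of a missed convention. With the paper's ordering $x=[x_{0,0},x_{1,0},x_{0,1},x_{1,1}]^\mathbf{T}$ one gets exactly
\begin{equation*}
4\,||x||_{DCT}^{2,1} \;=\; ||x||_{LTV}\;+\;(x_{0,0}-x_{1,1})^2+(x_{0,1}-x_{1,0})^2,
\end{equation*}
so the retained AC energy is the Laplacian quadratic form of the complete graph $K_4$ on the four pixels while the linear TV term is the Laplacian of the $4$-cycle: same kernel (constant patches), but not proportional. Your single-block check ($c=d=\tfrac12(a+b)$ under the AC energy versus $c=\tfrac13(2a+b)$, $d=\tfrac13(a+2b)$ under linear TV) is a valid counterexample to equality of the minimizers subject to $X_\Omega=M_\Omega$, and your reading of the mask (drop only the DC coefficient when $p=2$, $q=1$) is the one the paper's own proof uses. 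So the ``real difficulty'' you flag at the end cannot be closed; no choice of tiling or weighting makes the two constrained problems share their solutions for non-constant observed data.

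The reason your attempt stalls is that the theorem's own proof contains precisely the hole you found. The paper writes $4||x||_{DCT}^{2,1}=\sum_{i=1}^3(e_i^\mathbf{T}x)^2$ and $||x||_{LTV}=\sum_{i=1}^4(d_i^\mathbf{T}x)^2$, forms the Hessians $\mathcal{E}=2\sum_i e_ie_i^\mathbf{T}$ and $\mathcal{D}=2\sum_i d_id_i^\mathbf{T}$, and concludes from $\mathcal{N}(\mathcal{E})=\mathcal{N}(\mathcal{D})$ that the optimal solutions coincide. That argument only identifies the \emph{unconstrained} minimizers (the constant images); for the constrained problems in Fcn.~\ref{Fcn:minTV} and Fcn.~\ref{Fcn:localmin} the stationarity conditions are $(\mathcal{E}x)_{\Omega^c}=0$ versus $(\mathcal{D}x)_{\Omega^c}=0$, and since $\mathcal{E}$ and $\mathcal{D}$ are not scalar multiples of one another (after removing the common factor $2$ their spectra are $\{0,4,4,4\}$ and $\{0,2,2,4\}$) these linear systems generally have different solutions --- exactly what your example exhibits. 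Your Parseval/pairwise-difference decomposition and the paper's rank-one decomposition describe the same two quadratic forms, but yours makes visible the two extra ``diagonal'' edges of $K_4$ over $C_4$ that the null-space comparison hides. At most a weaker statement survives: the two regularizers vanish on the same images, hence share their unconstrained minimizers; the claimed equality of constrained minimizers does not follow, and you should not expect to prove it.
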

\begin{proof}
For any $2\times 2$ patch $x=[x_{0,0},x_{1,0},x_{0,1},x_{1,1}]^\mathbf{T}$ in $\mathbf{X}$, its DCT coefficient matrix is $\mathcal{C} = \mathcal{T}(x)$ and the local DCT norm is
\begin{equation}
||x||_{DCT}^{2,1} = \mathcal{C}_{0,1}^2+\mathcal{C}_{1,0}^2+\mathcal{C}_{1,1}^2,
\end{equation}
where
\begin{equation}
\begin{aligned}
\mathcal{C}_{0,1} &= \frac{1}{2}\left((x_{0,0}-x_{0,1})+(x_{1,0}-x_{1,1}) \right),\\
\mathcal{C}_{1,0} &= \frac{1}{2}\left((x_{0,0}-x_{1,0})+(x_{0,1}-x_{1,1}) \right),\\
\mathcal{C}_{1,1} &= \frac{1}{2}\left((x_{0,0}+x_{1,1})-(x_{0,1}+x_{1,0}) \right),
\end{aligned}
\end{equation}
and the TV norm of $X$ is calculated as:
\begin{equation}
\begin{aligned}
||x||_{LTV} &= (x_{00}-x_{01})^2+(x_{00}-x_{10})^2\\
&+(x_{10}-x_{11})^2+(x_{01}-x_{11})^2.
\end{aligned}
\end{equation}

Let $f(x) = 4||x||_{DCT}^{2,1}$, $e_1=[1,1,-1,-1]^\mathbf{T}$, $e_2=[1,-1,1,-1]^\mathbf{T}$, and $e_3=[1,-1,-1,1]^\mathbf{T}$. We have
\begin{equation}
\begin{aligned}
f(x)&=\left(e_1^\mathbf{T}x\right)^2+\left(e_2^\mathbf{T}x\right)^2+\left(e_3^\mathbf{T}x\right)^2\\
&=x^\mathbf{T}e_1e_1^\mathbf{T}x+x^\mathbf{T}e_2e_2^\mathbf{T}x+x^\mathbf{T}e_3e_3^\mathbf{T}x\\
&=\sum_{i=1}^3 x^\mathbf{T}e_ie_i^\mathbf{T}x,
\end{aligned}
\end{equation} 

Similarly, let $g(x) = ||x||_{LTV}$, $d_1=[1,0,-1,0]^\mathbf{T}$, $d_2=[1,-1,0,0]^\mathbf{T}$, $d_3=[0,1,0,-1]^\mathbf{T}$, and $d_4=[0,0,1,-1]^\mathbf{T}$. We have
\begin{equation}
g(x)=\sum_{i=1}^4 x^\mathbf{T}d_id_i^\mathbf{T}x,
\end{equation} 

Consider two minimizations $\min f(x)$ and $\min g(x)$. Their optimal solutions satisfy:
\begin{equation}
\begin{aligned}
\triangledown_x f(x)&=2\sum_{i=1}^3e_ie_i^\mathbf{T}x= \mathcal{E}x=0\\
\triangledown_x g(x)&=2\sum_{i=1}^4d_id_i^\mathbf{T}x= \mathcal{D}x=0,
\end{aligned}
\label{Fcn:nullspace}
\end{equation}
where $\mathcal{E} = 2\sum_{i=1}^3e_ie_i^\mathbf{T}$, $\mathcal{D}=2\sum_{i=1}^4d_id_i^\mathbf{T}$ and their null spaces are equivalent $\mathcal{N}(\mathcal{E}) = \mathcal{N}(\mathcal{D})$. 

Furthermore, it is easy to demonstrate the null spaces of $\mathcal{E}$ and $\mathcal{D}$ are identical for any $N \times M$ image by formatting its $\mathcal{E}$ and $\mathcal{D}$ according to Fcn.~\ref{Fcn:nullspace}. Hence, the optimal solution of DNM is equal to that of the TV norm minimization.
\end{proof}

According to Theorem 1, we conclude that $||x||_{DCT}^{2,1}$ is equivalent to the TV norm. Moreover, we suggest that the proposed approach in Fcn.\ref{Fcn:multiscale} has benefits over that of the TV norm since there is flexible to control over the degree of smoothness by operating at multiple scales.

\begin{figure}[h]
\centering
 \includegraphics[width=0.95\linewidth]{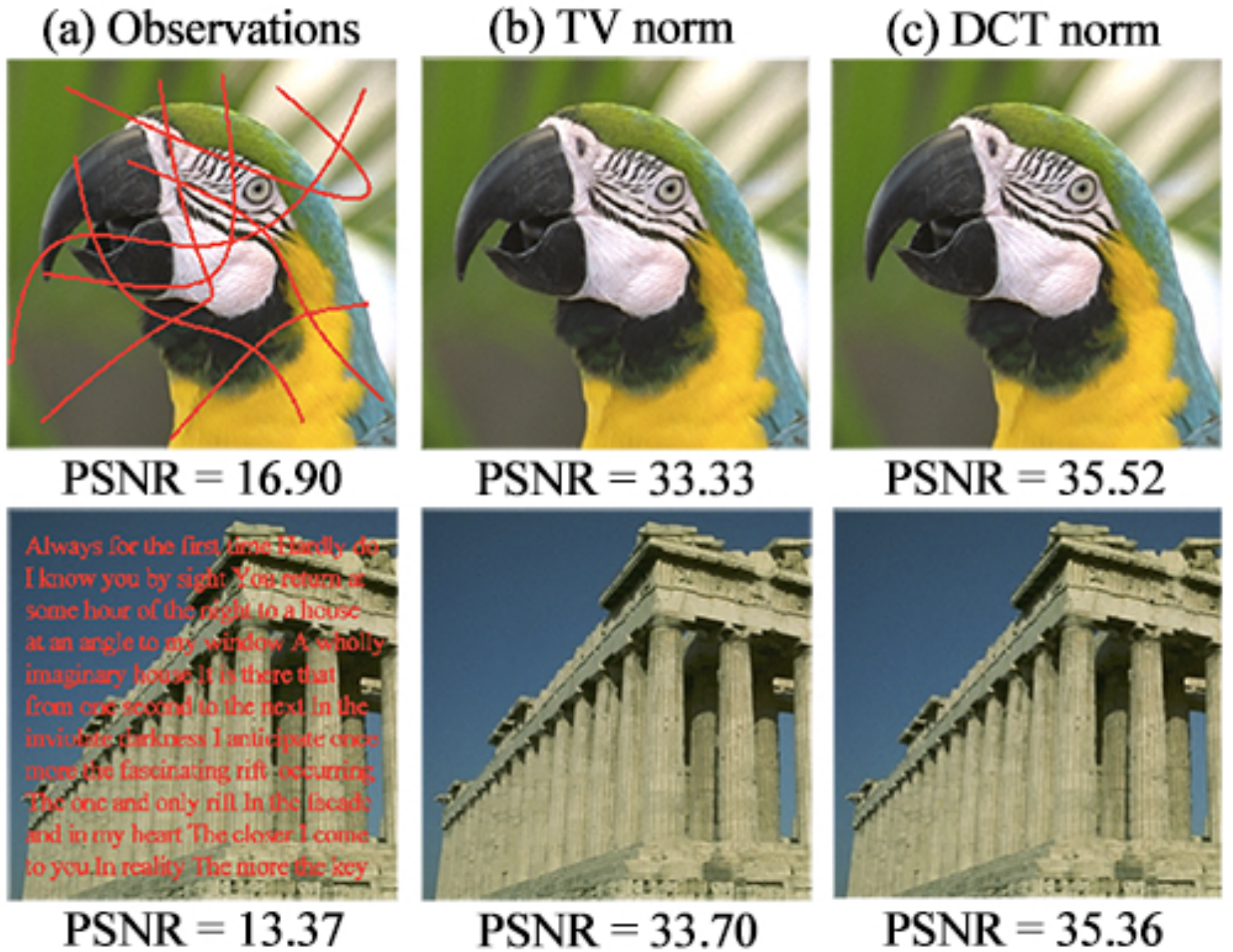}
\caption{A comparison of the DCT norm and TV norm in a removal problem. Left to right show corrupted observations and reconstructed images using the TV norm and DCT norm, respectively. }
\label{Fig:inpainting}
\end{figure}

The TV norm is noted for its strong capacity for image inpainting, \eg, text removal~\cite{tnuclearnorm}. This is a challenging task, because the pixels covered by the text are not randomly distributed. Hence, the optimization function with a regularization that encourages global smoothness might be expected to perform well. Since color images have three channels (\ie, red, green, and blue), for fair comparison with the TV norm we simply deal with each channel separately and combine them to obtain the final result. As illustrated in Fig.\ref{Fig:inpainting}, the visual recovery results obtained with the proposed DCT norm are superior to those of the TV norm. The results show that the proposed DCT norm outperforms the TV norm because it takes all pixels into consideration. Here, we use a two-scale DCT norm with $p_1 = 2$, $p_2 = 512$ (image sizes), $q_1 = 1$, and $q_2 = 256$. Obviously, the peak signal-to-noise ratio (PSNR) values of the proposed DCT norm are about 2\emph{dB} higher than that of the TV norm.
 
\section{DCT norm for visual recovery}
Image completion aims to restore an original image $X$ from its corrupted observation $M$ with an observed region $\Omega$. Recovering missing values in a matrix with limited observed information has recently attracted considerable interest~\cite{tensor13,MC10,tnuclearnorm}.

This problem is commonly addressed with inpainting~\cite{tnuclearnorm} or denoising~\cite{WNNM} methods, especially when the missing ratio is not too high ($\leq 80\%$). The non-local means [et al.2005] and its variations~\cite{WNNM,SAIST} are the current state-of-the-art techniques and exploit the self-similarity characteristic of images. However, when the observation is very corrupt, \eg, the missing ratio is higher than $90\%$ (see Fig.\ref{Fig:exp}), non-local based algorithms are less useful since they cannot find similar patches in the given image.

\begin{table*}[t]
\begin{minipage}{0.5\textwidth}
\centering
\normalsize
\caption{PSNR Comparison (Unit: $dB$) }
\label{Tab:PSNR}
\vspace{0.7em}
\begin{tabular}{c||c|c|c|c|c}
\hline   
$\phi$ & SVT & TNNR & LTVNN & SAIST & DNM \\
\hline \hline
90\% & 19.299 & 19.947 & 26.134 & 25.928 & \textbf{28.865}\\
\hline
95\% & 16.984 & 14.125 &  23.408 & 23.682 & \textbf{26.182}\\
\hline
98\% & 15.149 & 7.543 & 20.289 & 21.301 & \textbf{23.377}\\
\hline
99\% & 14.313 & 5.458 & 18.245 & 19.605 & \textbf{21.704}\\
\hline
\end{tabular}
\end{minipage}
\begin{minipage}{0.5\textwidth}
\centering
\normalsize
\caption{SSIM Comparison}
\label{Tab:SSIM}
\vspace{0.7em}
\begin{tabular}{c||c|c|c|c|c}
\hline
$\phi$ & SVT & TNNR & LTVNN & SAIST & DNM \\
\hline \hline
90\% & 0.7170 & 0.6970 & 0.9125 & 0.8876 & \textbf{0.9472}\\
\hline
95\% & 0.6300 & 0.4281 & 0.8647 & 0.8470 & \textbf{0.9161}\\
\hline
98\% &0.5134 & 0.1063 & 0.7917 & 0.7989 & \textbf{0.8712}\\
\hline
99\% &0.4291& 0.0544 & 0.7275  & 0.7624 & \textbf{0.8391}\\
\hline
\end{tabular}
\end{minipage}
\end{table*}

Base on the proposed multi-scale DCT norm and existing rank minimization techniques, we establish an efficient optimization problem that contains the various factors mentioned above: 
\begin{equation}
\begin{aligned}
\hat{X} =& \arg\min_X||X||_r+\sum_i \lambda_i ||X||_{DCT}^{p_i,q_i}\\
&+\frac{\gamma}{2}||\mathcal{P}_{\Omega}(X)-\mathcal{P}_{\Omega}(M)||_F^2,
\end{aligned}
\label{Fcn:ObjFcn}
\end{equation}
where $\lambda_i$ denotes the weighting parameter for the DCT norm $||X||_{DCT}^{p_i,q_i}$ of $X$ in the $i$-$th$ scale. $\gamma>0$ is a relaxation factor that converts the original problem into an unconstrained minimization~\cite{tnuclearnorm}. The relaxed problem can be solved using a gradient-based algorithm~\cite{Gradient}.

Fcn.\ref{Fcn:ObjFcn} is naturally designed for the recovery of gray images. However, it is easy to extend Fcn.\ref{Fcn:ObjFcn} to handle color images: 
\begin{equation}
\begin{aligned}
\hat{X} =& \sum_c \arg\min_X||X_{(c)}||_r+\sum_c \sum_i \lambda_i ||X_{(c)}||_{DCT}^{p_i,q_i}\\
&+\alpha\sum_c ||X_{(c)}||_{freq}+\frac{\gamma}{2}||\mathcal{P}_{\Omega}(X)-\mathcal{P}_{\Omega}(M)||_F^2,
\end{aligned}
\label{Fcn:colorObjFcn}
\end{equation}
where $X$ is an RGB image and $X_{(c)}$ denotes the $c$-$th$ channel of $X$, \ie,  $X_{(1)}$, $X_{(2)}$, $X_{(3)}$ corresponding to the red, green, and blue channels, respectively. An additional regularization with constant weight $\alpha$  is employed in Fcn.\ref{Fcn:colorObjFcn} to encourage DCT coefficients in any two channels of $X$ to be similar: 
\begin{equation}
||X_{(c)}||_{freq} = \frac{1}{4}\sum_{i\neq c}||S^c *C\left(X_{(c)}-X_{(i)}\right)C^\mathbf{T}||_F^2,
\end{equation}
where $S^c_{i,j} = 0\ if\ i = 0\ and\ j = 0,\ S^c_{i,j} = 1\ otherwise$ is a mask to remove the DC coefficient because the illumination information between channels is different. Although the value of $||X_{(c)}||_{freq}$ is equal to that calculated in the spatial domain after subtracting the average value of each channel, the calculation in the frequency domain will assign greater weights to low-frequency coefficients. This is important for constructing images of better visual quality. The gradient of the DCT norm is: 
\begin{equation}
\begin{aligned}
\mathcal{G}&\left(||X_{(c)}||_{DCT}\right) = \frac{\partial{\sum_i \lambda_i ||X_{(c)}||_{DCT}^{p_i,q_i}}}{\partial{X_{(c)}}}\\
& = \sum_i \lambda_i \mathcal{I} \left( \left(C_{p_i} \otimes C_{p_i}\right)^\mathbf{T} \left[ S_{p_i,q_i} * \left(C_{p_i}\otimes C_{p_i} \mathbf{X}_{p_i} \right) \right] \right),
\end{aligned}
\label{Fcn:DCTgradient}
\end{equation}
where $\mathcal{I}(\cdot)$ is an operation that recovers the stacking matrix $\mathbf{X}_{p_i}$ into the original image and $\mathcal{I}(\mathbf{X}_{p_i}) = X_{(c)}$. The gradient of $||X_{(c)}||_{freq}$ is 
\begin{equation}
\begin{aligned}
\mathcal{G}\left(||X_{(c)}||_{freq}\right) &= \frac{\partial{ ||X_{(c)}||_{freq}}}{\partial{X_{(c)}}}\\
 = \sum_{i\neq c} &C^\mathbf{T}\left(S^c *C\left(X_{(c)}-X_{(i)}\right)C^\mathbf{T}\right)C.
\end{aligned}
\end{equation}

An iterative TV norm~\cite{iterTV}  has also been proposed, and its optimal estimation is superior to that of directly solving the TV norm of the input observation. In this paper, we borrow this iterative strategy~\cite{SAIST,WNNM} and extend the DCT norm to an iterative method: 
\begin{equation}
M^{(k+1)}=M^{(k)}+\delta \mathcal{P}_{\Omega}\left(M-X^{(k)} \right),
\label{Fcn:iter}
\end{equation}
where $k$ is the iteration number and $\delta$ is a relaxation parameter (often set to 0.1). We describe the image completion method by exploiting the proposed multi-scale DCT norm as showed in Alg.\ref{Alg:Completion}.

\begin{algorithm}[h]
\caption{DCT norm minimization for visual recovery.}
\label{Alg:Completion}
\begin{algorithmic}[1]
\REQUIRE The corrupted observation $M^0 = M$, $\hat{X}^0 = M$ and the corresponding parameters;
\REPEAT
\STATE $X^k \leftarrow M^k$;
\REPEAT
\FOR{each scale $p_i$}
\STATE Divide $X^k$ into several $p_i \times p_i$ patches;
\STATE Accumulate the gradient of $X^k$ in the $p_i$ scale;
\ENDFOR
\STATE Calculate the gradient, except the nuclear norm;
\STATE Obtain the low-rank approximation;
\UNTIL{convergence}
\STATE $M^{(k+1)} \leftarrow  M^{(k)}+\delta \mathcal{P}_{\Omega} \left(M-X^{(k)} \right)$;
\UNTIL{$||M^{(k+1)}-X^{k}||_F \leq \epsilon$}
\ENSURE The estimated image $\hat{X}$;
\end{algorithmic}
\end{algorithm}

\section{Experiments}
\label{Sec:Exp}
\textbf{Experimental setup.} Experiments were carried on eight images~\cite{NCSR} widely used for evaluating the performance of image restoration algorithms (Fig.\ref{Fig:test}). This is a benchmark dataset widely used for evaluating the visual recovery performance, with a variety of scenes. 

\begin{figure}[h]
\begin{center}
\includegraphics[width=0.95\linewidth]{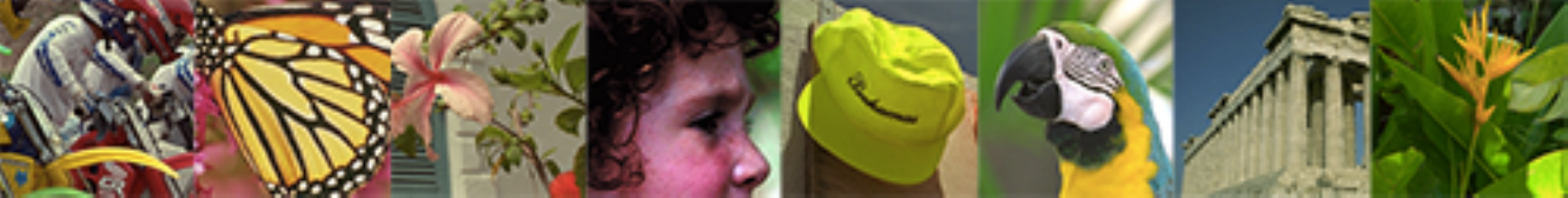}
\end{center}
\caption{The test images.}
\label{Fig:test}
\end{figure}

Observations were generated by randomly sampling a small proportion (ranging from 1\% to 10\%) of pixels from the images subject to a Gaussian distribution. We used two standard criteria to evaluate the recovery performance: PSNR and structural similarity (SSIM)~\cite{SSIM}.

\begin{figure*}[t]
\centering
 \includegraphics[width=1\linewidth]{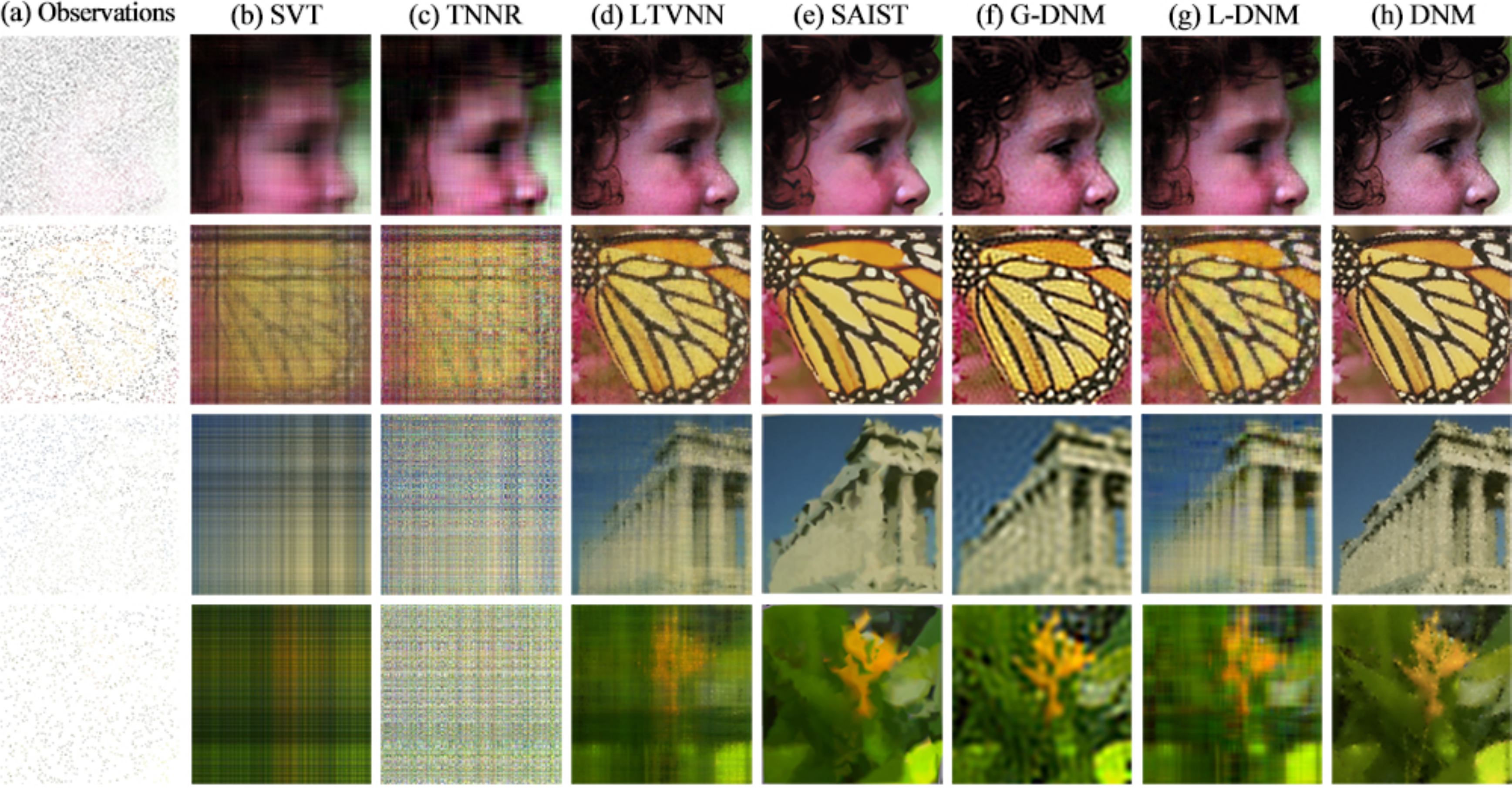}
\caption{The completion results for various corrupted images from top to bottom and their missing pixel rates: Girl (90\%), Butterfly (95\%), Parthenon (98\%), and Plants (99\%). Corrupted observations are shown in the first column and the reconstructed images using SVT, TNNR, LTVNN, SAIST, G-DNM, L-DNM, and the proposed DNM are shown from left to right. DNM results are more visually pleasing than those of previous state-of-the-art methods.}
\label{Fig:exp}
\end{figure*}

\noindent \textbf{Parameter settings.} The proposed completion algorithm has several important parameters: $p_i$, $q_i$ and $\lambda_i$. In the experiments, $p_i$ was set as 2, 8, and 512 (the size of each image) to obtain a better smoothness estimation and $q_1 = 1, q_2 = 4, q_3 = 192$, which denote the locally, blockly, and globally smooth regularizations, respectively. The weight parameter $\alpha$ was set to $10^{-3}$, $\gamma$ was set to 0.5, and $\lambda_i=\lambda = 1.5 \times 10^{-2}$ empirically. For the DNM stop conditions, we set the tolerance $\epsilon$ in Alg.\ref{Alg:Completion} to $10^{-8}$. The parameter $r$ for truncating singular values was set to 192, which is much larger than that in~\cite{tnuclearnorm}. Note that, the proposed algorithm is not sensitive to most of parameters, thus all of the parameters were set empirically which is a common setting in the context of low-rank minimization~\cite{tnuclearnorm}.

\noindent \textbf{Experimental results.} We conducted experiments using different image completion algorithms to compare the performance of the proposed DCT norm with current methods. Therefore, completion experiments were carried using the proposed DNM and state-of-the-art comparison algorithms: SVT~\cite{softthreshold}, TNNR~\cite{tnuclearnorm}, LTVNN~\cite{LTV}, and SAIST~\cite{SAIST}. Most of the comparative methods were conducted using the source code provided by authors. SVT is reported as the baseline since it is the cornerstone of the low rank approach, and TNNR is an enhanced low rank completion algorithm that is known to produce better results. LTVNN embeds the linear TV norm into the low rank minimization. SAIST is a non-local method that improves the estimated results after taking an interpolation as its initial value.

Since TNNR will be degenerated into SVT while the parameter $r$ in $||X||_r$ is set to 0, we set $r = 5$ in TNNR to compare the results of TNNR and SVT. It is interesting to note that TNNR's estimation is better than that of SVT when the missing rate $\phi$ is less than 90\%, while it is inferior to SVT when $90 < \phi$. This is because TNNR retains the largest $r$ singular values, which depict the major structure of the input image, but are most likely reflect the structure of the corrupted image with a considerable missing rate; hence, TNNR is inefficient. Since the proposed DCT norm can conductively recover image structure, we set a larger threshold $r = 192$ in DNM to avoid undesired lines caused by excessive minimization of the nuclear norm. Results are showed in Tab.\ref{Tab:PSNR} and Tab.\ref{Tab:SSIM}. The values are the average of the eight test images, where the highest evaluation result in each case is highlighted in bold. The proposed visual recovery algorithm based on the DCT norm clearly outperforms the others.

We also performed a qualitative comparison of the different completion algorithms. Results on different scenes (\emph{Girl}, \emph{Butterfly}, \emph{Parthenon}, \emph{Plants}) and missing ratios between 90\% and 99\% are shown in Fig.\ref{Fig:exp}. We also report the results exploiting the global and local DCT norms, marked as G-DNM and L-DNM in the figure. G-DNM produces some uncoordinated grids due excessive reduction of necessary high-frequency information, and the results of L-DNM are similar to that of the TV norm. Since images produced by DNM are smooth and natural, they have a better perceptual quality. It is obvious that the result of G-DNM has a clearer overall structure, while its local regions are not smooth enough. The result of L-DNM presents an opposite phenomenon. The multi-scale DNM achieves the best performance by combining them. 

The SVT results contain several lines because this algorithm slightly undermines the image structure when shrinking all singular values; therefore, the TNNR estimation is more visually pleasing when the missing pixel rate is not too high. Although the results of LTVNN and SAIST are better than those of the previous two algorithms, LTVNN is not particularly clear and SAIST over-smooths some important structures. Specifically, SAIST post-processes after initializing the image using an interpolation method, which only completes the missing pixels by exploiting neighborhood information; thus, it lacks overall structure. Furthermore, SAIST is useless when the input corrupted image does not possess the initialization produced using interpolation methods. The estimated images of the proposed algorithm are clear, sharp, and visually pleasing because the multi-scale DCT norm makes the image and its patches at different scales smooth and natural. Specifically, it obtains an estimation in which the neighborhood pixels inside are smoothed using the locally smooth DCT norm and it can also obtain an estimation that produces a globally smooth output by exploiting the globally smooth DCT norm.

\section{Discussion and conclusions}
Most existing rank-minimizing techniques do not efficiently handle data with over 90\% missing values. Therefore, we propose a powerful smooth regularization to overcome this problem: the DCT norm. Compared to the traditional TV norm, the proposed scheme involves all the pixel values and can guarantee estimation smoothness at different scales. Moreover, we demonstrate that the TV norm can be regarded as a special case of the DCT norm. By combining the truncated nuclear norm and the proposed scheme we establish an efficient image completion model. Experiments show that the estimated images using the proposed multi-scale DCT norm are more visually pleasing than those produced by the previous state-of-the-art. Additionally, the proposed smooth regularization can be independently embedded into most image processing tasks, \eg, image inpainting and image denoising.

\ifCLASSOPTIONcaptionsoff
  \newpage
\fi

\renewcommand\refname{References}
{\small
\bibliographystyle{ieee}
\bibliography{ref}
}

%
%
%

\end{document}